\newcommand{\E}{\mathsf{E}}
\DeclareMathOperator*{\Prob}{\mathsf{P}}
\DeclareMathOperator*{\st}{^{\text{st}}}
\DeclareMathOperator*{\nd}{^{\text{nd}}}
\DeclareMathOperator*{\nth}{^{\text{th}}}
\newcommand{\id}{\mathrm{id}}
\newcommand{\real}{\mathbb{R}}
\newcommand{\hilb}{\mathcal{H}}
\newcommand{\X}{\mathcal{X}}
\newcommand{\Xp}{\mathcal{X'}}
\newcommand{\C}{\mathcal{C}}
\newcommand{\R}{\mathcal{R}}
\newcommand{\q}{\mathbf{q}}
\newcommand{\qp}{\mathbf{q'}}
\newcommand{\x}{\mathbf{x}}
\newcommand{\xp}{\mathbf{x'}}
\newcommand{\mmk}{k_{\mathrm{m}}}
\newcommand{\ppk}{k_{\mathrm{p}}}
\newtheorem{theorem}{Theorem}
\newtheorem{lemma}{Lemma}
\newtheorem{proposition}{Proposition}
\begin{document}

\title{Generative and Latent Mean Map Kernels}
\author{Nishant A. Mehta$^*$}
\author{Alexander G. Gray}
\affil{
  School of Computer Science \\
  College of Computing \\
  Georgia Institute of Technology \\
  Atlanta, Georgia
}
\date{\today}
\maketitle
\thispagestyle{empty}
\let\oldthefootnote\thefootnote
\renewcommand{\thefootnote}{\fnsymbol{footnote}}
\footnotetext[1]{To whom correspondence should be addressed. Email: \protect\url{niche@cc.gatech.edu}}
\let\thefootnote\oldthefootnote

\maketitle

\maketitle 
\begin{abstract}
We introduce two kernels that extend the mean map, which embeds probability measures in Hilbert spaces.
The generative mean map kernel (GMMK) is a smooth similarity measure between probabilistic models. 
The latent mean map kernel (LMMK) generalizes the non-iid formulation of Hilbert space embeddings of empirical distributions in order to incorporate latent variable models.
When comparing certain classes of distributions, the GMMK exhibits beneficial regularization and generalization properties not shown for previous generative kernels.
We present experiments comparing support vector machine performance using the GMMK and LMMK between hidden Markov models to the performance of other methods on discrete and continuous observation sequence data. The results suggest that, in many cases, the GMMK has generalization error competitive with or better than other methods.
\\{\bf Keywords:} Kernel methods, graphical models, complexity
\end{abstract}

\section{Introduction} \label{sec:intro}
Generative kernels offer an elegant way to apply kernel methods for classification, clustering, and manifold learning to distributions, and they are particularly useful for applying kernel methods to non-iid data. By using kernels that incorporate statistical dependence information, we can leverage a rich set of existing methods such as kernel support vector machines (SVMs) and kernel principal components analysis (kPCA) \cite{scholkopf1997kpc} to learn from this data. As an example, we consider sequence classification as a particular instance of learning from non-iid data. In sequence classification, the goal is to label a sequence $(X_1, X_2,  \ldots, X_T)$ with one label $Y$, where the example sequences can be of varying lengths and the ordering of the observations within a sequence informs non-trivial dependencies.

SVMs using nonlinear kernels such as the polynomial and Gaussian kernels have performed strongly for a variety of non-sequential data classification tasks \cite{scholkopf2002lk}. More recent work has applied kernels to measure similarity between sequences via similarity between generative models trained on those sequences \cite{jebara2004ppk} or by making use of
metrics on statistical manifolds \cite{jaakkola1999egm,lafferty2005dks}. Guilbart \cite{guilbart1979pse} and later Suquet \cite{suquet1995dem} (also see \cite{suquet2008rkh} for a related English-language paper) previously extended reproducing kernels to kernels on bounded signed measures. One particular kernel of Hein and Bousquet \cite{hein2005hmp}, Structural Kernel I, generalizes the kernel of Suquet. Unfortunately, efficient computation of this kernel family for many interesting distributions is highly non-trivial.

We provide a new derivation for the natural special case of Structural Kernel I described by Guilbart and Suquet, provide concrete examples for how it can be computed for several distributions relevant to the machine learning community, and provide for the first time generalization error guarantees when learning with this kernel.
We also introduce a second kernel that naturally extends the idea of Hilbert space embeddings of empirical clique distributions (the empirical mean map) to graphical models with latent variables.

For lack of a generally accepted name for this kernel, we refer to it in this work as the generative mean map kernel (GMMK). The GMMK measures similarity between observations by providing a nonlinear similarity between generative models estimated from each observation. We show in Section \ref{sec:theory} that the GMMK has unique learning theoretic advantages.
We also introduce the latent MMK (LMMK), which measures similarity between pairs of structured data observations with respect to a single (global) generative model $\theta$. This is accomplished by measuring the similarity of sufficient empirical and posterior distributions from two structured data observations.

We begin by reviewing Hilbert space embeddings of distributions via the mean map and then introduce the generative mean map kernel.
In Section \ref{sec:generative-MMK} we show how to compute the GMMK for several widely-used distributions,
and in Section \ref{sec:latent-MMK} we discuss the LMMK, an extension of the non-iid empirical MMK for latent variable models.
We form connections between these two kernels and other kernels in Section \ref{sec:relation}.
We then analyze some theoretical properties of the GMMK before 
concluding with promising results on
sequence classification and learning a species manifold from biodiversity data.

\section{The Mean Map} \label{sec:the-mean-map}
Although the concept dates back quite a bit \cite{guilbart1978eps,guilbart1979pse,suquet1995dem}, the phrase \textit{mean map} was coined by Smola et al. \cite{smola2007hse}  as a Hilbert space embedding of empirical distributions. For $\X$ a domain of observations with probability measure $P_x$ and $X = \{x_1, \ldots, x_m\} \subset \X$ a set of $m$ samples drawn iid, consider the reproducing kernel Hilbert space (RKHS) $\hilb$ with feature map $\phi: \X \rightarrow \hilb$, where $\phi(x) = k(x, \cdot)$ and from the reproducing property we have $\langle f, \phi(x) \rangle = f(x)$ and $\langle \phi(x), \phi(y) \rangle = k(x, y)$.
The mean map $\mu$ of the true ($P_x$) and empirical ($X$) distributions respectively are defined as
\begin{align}
\mu[P_x] :&= \E_x[k(x, \cdot)] \\
\mu[X] :&= \frac{1}{m} \sum_{i=1}^m k(x_i, \cdot).  \label{eqn:mean-map}
\end{align}
The operator $\mu$ maps distributions to elements of the RKHS and is injective for RKHSs induced by universal kernels \cite{steinwart2002ikc}, such as
the Gaussian radial basis function (RBF) kernel $k(x,y) = \exp(-\lambda \| x - y \|^2)$ and the Laplace kernel $k(x,y) = \exp(-\beta \sum_{i=1}^n |x_i - y_i|)$ for $\lambda,\beta \in \real^+$.

Some previous works \cite{gretton2005msd,zhang2009kmi} exploited the linear convergence of $\mu[X]$ to $\mu[P_x]$ in order to compute kernels between two sets of samples; the methods introduced here are departures from these applications of the mean map.
After briefly describing the empirical mean map kernel (EMMK) we connect it to the GMMK.

\subsection{Empirical mean map kernel}
The empirical mean map is an injective mapping (for universal base kernels) of empirical probability distributions into an RKHS. The method has been applied to iid observations and also non-iid observations \cite{zhang2009kmi} by fixing a dependency model for the observations and considering the empirical distributions induced by the maximal cliques of this model. The empirical mean map of non-iid data can be decomposed into the sum of the empirical mean maps for each maximal clique's distribution.

Following the notation from \cite{zhang2009kmi}, for a graphical model $G$ with variable set $Z$ and maximal cliques set $\C$, let $v_c$ be universal kernels on the variable subset of $Z$ induced by clique $c \in \C$.
Then
\begin{align} 
v(z,z') = \sum_{c \in \C} v_c(z_c, z'_c)
\end{align}
embeds all probability distributions with the specified conditional independence relations using an exponential family model with kernel $v$ \cite{altun2004efc}.

A limitation of the empirical mean map is that it operates only on observed data and hence cannot handle latent variable dependency models. Critically, using dependency models over only observable variables can be quite restrictive, as highlighted by the strong performance of latent variable models such as hidden Markov models (HMMs) for many learning problems. Also, note that although Zhang et al. \cite{zhang2009kmi} used the empirical mean map with latent variable models, the latent variables were used to optimize a kernel-based objective rather than to model the dependency explicitly.
Also, while the EMMK previously has been used to measure and optimize dependence \cite{gretton2005msd,zhang2009kmi,gretton2008kmt}, it has not yet been used for classification.

\section{The Generative Mean Map Kernel} \label{sec:generative-MMK}

\subsection{The generative mean map}
Suppose we are given two objects $x$ and $y$. These objects could be documents, sequences, images, or 
points in $\real^n$. For all but the last case,
special effort is necessary to form a suitable kernel between the objects that captures their underlying similarity well.
We can modify the mean map in \eqref{eqn:mean-map} such that the expectation is evaluated for $x \sim \hat{P_x}$ rather than $x \sim P^n$ (for $P^n$ the empirical distribution induced by the sample $X$), where $\hat{P_x}$ is an estimated probabilistic model of $x$. This modification induces the generative mean map
\begin{align}
\mu[\hat{P_x}] = \E_{x \sim \hat{P_x}}[\phi(x)].
\end{align}
For $\hat{P}_x$ learned from $x$ and $\hat{P}_y$ learned from $y$, let the \textit{generative mean map kernel} be
\begin{align}
  \langle \mu[\hat{P}_x], \mu[\hat{P}_y] \rangle 
  &= \E_{x \sim \hat{P}_x, y \sim \hat{P}_y}[k(x, y)] \\
  &= \int \int \hat{P}_x(x) \hat{P}_y(y) k(x, y) \, dx \, dy. \label{eqn:gmmk}
\end{align}
Though it has already been shown in various other works that this kernel is pd \cite{guilbart1978eps,guilbart1979pse,suquet1995dem,hein2005hmp}, for completeness we provide a short proof.
\begin{proposition}
The generative mean map kernel with positive definite (pd) kernel $k$ (i.e. $k \succeq 0$) also is pd.
\end{proposition}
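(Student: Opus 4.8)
The plan is to verify positive definiteness directly from the definition: given points $\hat P_1, \ldots, \hat P_N$ (here playing the role of the objects' estimated models) and real coefficients $c_1, \ldots, c_N$, I must show $\sum_{i,j} c_i c_j \langle \mu[\hat P_i], \mu[\hat P_j] \rangle \geq 0$. The key observation is that the expression in \eqref{eqn:gmmk} linearizes this sum: since $\langle \mu[\hat P_i], \mu[\hat P_j] \rangle = \E_{x \sim \hat P_i, y \sim \hat P_j}[k(x,y)]$, linearity of expectation lets me pull the double sum inside the integrals.

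First I would write $\sum_{i,j} c_i c_j \langle \mu[\hat P_i], \mu[\hat P_j] \rangle = \sum_{i,j} c_i c_j \int\int \hat P_i(x) \hat P_j(y) k(x,y)\, dx\, dy$, then swap the finite sum and the integral to obtain $\int\int k(x,y) \bigl( \sum_i c_i \hat P_i(x) \bigr)\bigl( \sum_j c_j \hat P_j(y) \bigr) dx\, dy$. Defining the signed measure (or density) $g(x) := \sum_i c_i \hat P_i(x)$, this is $\int\int k(x,y) g(x) g(y)\, dx\, dy$, which is nonnegative because $k$ is a pd kernel — more precisely, because pd kernels satisfy $\int\int k(x,y)\, d\nu(x)\, d\nu(y) \geq 0$ for every finite signed measure $\nu$, this being the continuous analogue of the defining finite-sum inequality, obtainable by a standard limiting/discretization argument or directly from the RKHS feature-map identity $\int\int k(x,y) g(x) g(y)\, dx\, dy = \bigl\| \int \phi(x) g(x)\, dx \bigr\|_{\hilb}^2 \geq 0$. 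Symmetry of the GMMK is immediate from the symmetry of $k$.

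The main obstacle is purely one of rigor rather than ideas: justifying the interchange of summation and integration (trivial, as the sum is finite) and, more substantively, justifying that $k \succeq 0$ in the finite-sum sense implies the integral inequality $\int\int k\, g\, g \geq 0$. For a bounded continuous $k$ and compactly supported integrable $g$ this follows by approximating the integral with Riemann sums and invoking the finite-sum definition of positive definiteness; alternatively, one can bypass the issue entirely by working in the RKHS and writing $\mu[\hat P_i] = \int \phi(x) \hat P_i(x)\, dx$ as a Bochner integral, so that $\sum_{i,j} c_i c_j \langle \mu[\hat P_i], \mu[\hat P_j]\rangle = \langle \sum_i c_i \mu[\hat P_i], \sum_j c_j \mu[\hat P_j] \rangle = \| \sum_i c_i \mu[\hat P_i] \|_{\hilb}^2 \geq 0$. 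I would present this last version, as it is the cleanest and makes the claim essentially a one-line consequence of the inner-product structure.
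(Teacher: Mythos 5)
Your proposal is correct, and the version you elect to present is essentially the paper's own argument: identify the GMMK as the inner product of mean elements $\mu[\hat P_i] = \int \phi(x)\,\hat P_i(x)\,dx$ in the RKHS of $k$, so that any Gram matrix is a matrix of Hilbert-space inner products and $\sum_{i,j} c_i c_j \langle \mu[\hat P_i], \mu[\hat P_j]\rangle = \bigl\| \sum_i c_i \mu[\hat P_i] \bigr\|_{\hilb}^2 \geq 0$. Your additional discussion of the direct integral route and the interchange-of-limits issues is sound but not needed once the RKHS identification is made.
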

\begin{proof} 
For $k$ pd, there exists a feature map $\phi: \X \rightarrow \hilb$.
The GMMK is an inner product between mean elements in $\hilb$, for mean elements $\mu[p] = \int p(x) \phi(x) dx$.
By identification of the inner product between mean elements, given in \eqref{eqn:gmmk}, the GMMK is pd.
\end{proof}
Note that the EMMK is a kernel between \textit{sets of points} (which induce empirical distributions) whereas the GMMK is a kernel between \textit{functions}. We explore this kernel for examples of generative models of increasing structure. 

\subsection{Examples}

\subsubsection{Discrete distribution}
For $p$ and $p'$ discrete distributions with mean parameters $\alpha = (\alpha_1, \ldots, \alpha_k)$ and $\alpha' = (\alpha'_1, \ldots, \alpha'_k)$ respectively, the GMMK is
\begin{equation}
  \mmk(p, p') = \sum_{i = 1}^k \sum_{j=1}^k \alpha_i \alpha'_j \exp(-\lambda (1 - \delta_{i j})),
\end{equation}
where $\delta_{i,j} = 1$ if $i = j$ and 0 otherwise.

It would be of considerable benefit to compute this kernel for multinomial distributions; it currently is open whether this case admits a closed form expression.

\subsubsection{Gaussian distribution}
\begin{lemma}
  Let $p$ and $p'$ be multivariate Gaussian probability measures
$\mathcal{N}(\mu, \Sigma)$ and $\mathcal{N}(\mu', \Sigma')$ respectively.
  For the Gaussian RBF kernel $k(x, x') = \exp(-\frac{1}{2}\lambda \|x - x'\|^2)$, the GMMK for $p$ and $p'$ is
  \begin{align*}
    \mmk(p,p') = 
    \int \int
    \frac{\exp(-\frac{1}{2}(x - \mu)^T \Sigma^{-1} (x - \mu))}{(2 \pi)^{d/2} |\Sigma|^{1/2}} \,\,
    \frac{\exp(-\frac{1}{2}(x' - \mu')^T \Sigma'^{-1} (x' - \mu'))}{(2 \pi)^{d/2} |\Sigma'|^{1/2}} \,\,
    k(x, x') \, dx \, dx'
  \end{align*}

 \begin{flalign}
  \label{eqn:general-Gaussian-GMMK}
  \text{which can be computed in closed form as } \qquad
  \frac{\exp(-\frac{1}{2} (\beta^T \alpha^{-1} \beta - \delta))}{|I + \lambda (\Sigma + \Sigma')|^{1/2}}, && 
  \end{flalign}
\begin{flalign*}
\text{where\footnotemark } \qquad \alpha &= \Sigma^{-1} (\Sigma^{-1} + \lambda I)^{-1} \Sigma^{-1} + \Sigma'^{-1} + \Sigma^{-1}, &&\\
\beta &= \lambda \Sigma^{-1} (\Sigma^{-1} + \lambda I)^{-1} \mu + \Sigma'^{-1} \mu', &&\\
\delta &= -\lambda^2 \mu^T (\Sigma^{-1} + \lambda I)^{-1} \mu + \mu'^T \Sigma'^{-1} \mu' + \lambda \mu^T \mu. &&
\end{flalign*}
\footnotetext{We have confirmed that the exponential term is symmetric in $p$ and $p'$. See the simpler isotropic case in \eqref{eqn:isotropic-Gaussian-GMMK}.}

\end{lemma}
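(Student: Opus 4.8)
The plan is to evaluate the double integral directly as a Gaussian integral, eliminating $x$ and then $x'$ by completing the square twice and invoking $\int_{\real^d}\exp(-\tfrac12 z^T A z + b^T z)\,dz = (2\pi)^{d/2}|A|^{-1/2}\exp(\tfrac12 b^T A^{-1} b)$ for positive definite $A$. Write the integrand as $(2\pi)^{-d}|\Sigma|^{-1/2}|\Sigma'|^{-1/2}\exp(-\tfrac12 Q)$ with
\[
Q = (x-\mu)^T\Sigma^{-1}(x-\mu) + (x'-\mu')^T\Sigma'^{-1}(x'-\mu') + \lambda\|x-x'\|^2 .
\]
Collecting the $x$-dependent part of $Q$ gives the quadratic form $x^T(\Sigma^{-1}+\lambda I)x - 2x^T(\Sigma^{-1}\mu+\lambda x')$ plus terms free of $x$. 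Completing the square with $M := \Sigma^{-1}+\lambda I$ and integrating out $x$ costs a factor $(2\pi)^{d/2}|M|^{-1/2}$ and, through the $-\,(\Sigma^{-1}\mu+\lambda x')^T M^{-1}(\Sigma^{-1}\mu+\lambda x')$ term, produces a fresh quadratic form in $x'$ whose linear part is exactly $-2\,x'^T\!\big(\lambda\Sigma^{-1}M^{-1}\mu+\Sigma'^{-1}\mu'\big) = -2\,x'^T\beta$.

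Next I would collect all remaining $x'$ terms into $x'^T\alpha\, x' - 2 x'^T\beta + (\text{const})$, read off $\alpha$ and $\beta$ as in the statement, complete the square once more, and integrate out $x'$; this costs $(2\pi)^{d/2}|\alpha|^{-1/2}$ and leaves the scalar $\exp\!\big(\tfrac12(\beta^T\alpha^{-1}\beta-\delta)\big)$, where $\delta$ book-keeps the leftover constant $\mu^T\Sigma^{-1}\mu+\mu'^T\Sigma'^{-1}\mu'-\mu^T\Sigma^{-1}M^{-1}\Sigma^{-1}\mu$. Finally I would collect the prefactors $(2\pi)^{-d}|\Sigma|^{-1/2}|\Sigma'|^{-1/2}\cdot(2\pi)^{d/2}|M|^{-1/2}\cdot(2\pi)^{d/2}|\alpha|^{-1/2}$: the powers of $2\pi$ cancel, and I would show $|\Sigma|\,|M|\,|\Sigma'|\,|\alpha| = |I+\lambda(\Sigma+\Sigma')|$ using $|\Sigma|\,|\Sigma^{-1}+\lambda I| = |I+\lambda\Sigma|$ together with $|\Sigma'|\,|\alpha| = |I + \lambda\Sigma'(I+\lambda\Sigma)^{-1}|$ and $|(I+\lambda\Sigma)\big(I+\lambda\Sigma'(I+\lambda\Sigma)^{-1}\big)| = |I+\lambda(\Sigma+\Sigma')|$, which yields the claimed denominator $|I+\lambda(\Sigma+\Sigma')|^{1/2}$.

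The main obstacle is the matrix and determinant bookkeeping in these last two steps: pinning down the precise form of $\alpha$ (in particular tracking the sign of the $\Sigma^{-1}M^{-1}\Sigma^{-1}$ contribution), verifying that $\beta^T\alpha^{-1}\beta-\delta$ is symmetric under swapping $(\mu,\Sigma)\leftrightarrow(\mu',\Sigma')$ as the footnote promises, and confirming that the three determinant factors telescope to a single $|I+\lambda(\Sigma+\Sigma')|$.

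As an independent check — and a cleaner derivation — I would rewrite the RBF kernel as a Gaussian density, $k(x,x') = (2\pi/\lambda)^{d/2}\,\mathcal{N}(x;x',\lambda^{-1}I)$, so that $\mmk(p,p')$ becomes $(2\pi/\lambda)^{d/2}$ times the integral of a product of three Gaussian densities. Marginalising $x$ and then $x'$ with the product-of-Gaussians identity $\mathcal{N}(z;a,A)\,\mathcal{N}(z;b,B) = \mathcal{N}(a;b,A+B)\,\mathcal{N}(z;m,C)$ (for suitable $m,C$) collapses the integral to $(2\pi/\lambda)^{d/2}\,\mathcal{N}(\mu;\mu',\lambda^{-1}I+\Sigma+\Sigma')$, i.e.
\[
\mmk(p,p') = \frac{\exp\!\big(-\tfrac12(\mu-\mu')^T(\lambda^{-1}I+\Sigma+\Sigma')^{-1}(\mu-\mu')\big)}{|I+\lambda(\Sigma+\Sigma')|^{1/2}} ,
\]
which is manifestly symmetric in $p$ and $p'$, has exactly the predicted denominator, and must agree with the $\alpha,\beta,\delta$ expression after the algebra above; this also tells us what $\delta$ and $\alpha$ must be.
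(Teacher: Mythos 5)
Your plan is sound, and in fact it is more of a proof than the paper gives: the paper's entire argument for this lemma is the sentence ``The proof follows from a multitude of linear algebra identities,'' so your explicit double completion of the square is exactly the computation the authors are waving at. The genuinely valuable part of your proposal is the second derivation. Writing $k(x,x') = (2\pi/\lambda)^{d/2}\,\mathcal{N}(x; x', \lambda^{-1}I)$, or more directly observing that $x - x' \sim \mathcal{N}(\mu-\mu', \Sigma+\Sigma')$ and integrating $\exp(-\tfrac{\lambda}{2}\|z\|^2)$ against that single Gaussian, collapses the whole calculation to one completion of the square and yields
\begin{align*}
\mmk(p,p') = \frac{\exp\bigl(-\tfrac12(\mu-\mu')^T(\lambda^{-1}I+\Sigma+\Sigma')^{-1}(\mu-\mu')\bigr)}{|I+\lambda(\Sigma+\Sigma')|^{1/2}},
\end{align*}
which is manifestly symmetric and reduces exactly to the isotropic formula \eqref{eqn:isotropic-Gaussian-GMMK}. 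Your caution about ``tracking the sign of the $\Sigma^{-1}M^{-1}\Sigma^{-1}$ contribution'' is well placed: carrying out your first derivation carefully gives $\alpha = \Sigma'^{-1} + \Sigma^{-1} - \Sigma^{-1}(\Sigma^{-1}+\lambda I)^{-1}\Sigma^{-1}$ and an overall exponent $+\tfrac12(\beta^T\alpha^{-1}\beta-\delta)$, whereas the lemma as printed has a $+$ on the $\Sigma^{-1}(\Sigma^{-1}+\lambda I)^{-1}\Sigma^{-1}$ term and a $-\tfrac12$ outside; checking the isotropic case confirms that your signs, not the printed ones, reproduce \eqref{eqn:isotropic-Gaussian-GMMK} (the printed $\beta$ and $\delta$ do agree with yours after commuting $\Sigma^{-1}$ with $(\Sigma^{-1}+\lambda I)^{-1}$). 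So your independent check is not merely a sanity check --- it is the cleanest correct statement of the result, and it exposes an apparent typo in the lemma. The only thing left to make the first route airtight is the determinant telescoping you already sketch, $|\Sigma|\,|\Sigma^{-1}+\lambda I|\,|\Sigma'|\,|\alpha| = |I+\lambda(\Sigma+\Sigma')|$, which follows from the identities you list.
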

The proof follows from a multitude of linear algebra identities.

\subsubsection{Markov models and the connection to EMMK}
Suppose that each observation is a sequence of elements of $\real$:
\[
x_i = (x_i^{(1)}, \ldots, x_i^{(T_i)}) \qquad \text{for all } i \in [n].
\]
For both the generative mean map and the non-iid extension of the empirical mean map, let us assume the first-order Markov dependency model
\begin{align}
P \left( x_i^{(1)}, \ldots, x_i^{(T_i)} \right) = \prod_{t=1}^{T_i} P \left( x_i^{(t)} | x_i^{(t-1)} \right).
\end{align}

For the empirical mean map, we make no assumption on the form of $P^{(i)}_{t | t-1} := P \left( x_i^{(t)} | x_i^{(t-1)} \right)$, whereas for the generative mean map, we explicitly estimate $P^{(i)}_{t | t-1}$ as $\hat{P}^{(i)}_{t | t - 1}$.

The empirical mean map is
\begin{align}
\frac{1}{T_i-1} \sum_{t=1}^{T_i - 1} \phi \left( \left( x_i^{(t)}, x_i^{(t+1)} \right) \right),
\end{align}
whereas the generative mean map is
\begin{align}
\E_{(x_1, \ldots, x_T) \sim \hat{P}^{(i)}}[\phi((x_1, \ldots, x_T))] = \int \prod_{t=1}^T \hat{P}^{(i)}_{t | t - 1} \phi((x_1, \ldots, x_T)) \, dx_1 \ldots \, dx_T
\end{align}
for $T$ a free parameter. The kernel for each map is simply the inner product in that map's feature space. We now show by example that the GMMK can be computed efficiently for various graphical models.

\subsubsection{Hidden Markov models}
For a hidden Markov model (HMM) with probability measure $p$, let $\q = (q_0, \ldots, q_T)$ be the latent random variables and $\x = (x_0, \ldots, x_T)$ be the observable random variables. We similarly define $p'$, $\qp$, and $\xp$ for a second HMM.

Suppose that we have learned HMMs with probability measures $p$ and $p'$ and wish to compute the GMMK $\mmk(p,p')$ for the observable variables of length $T$ sequences drawn from these HMMs. The parameter $T$ serves as a witness length which allows control over the length of the sequences to be embedded in the RKHS (a larger $T$ translates to less weight on the models' initial conditions). 
The next result establishes the complexity of an efficient algorithm to compute the kernel.
\begin{lemma}
  The generative mean map kernel between an HMM of $n$ states with probability measure $p$ and an HMM of $n'$ states with probability measure $p'$ can be computed in time:
  \begin{enumerate}
  \item
   $O(n^3 T + n^2 k^2)$
    for a discrete observation HMM on $k$ symbols with $n' = n$.
  \item
   $O(n^3 T + n^2 m^2 \rho_d)$
    for a continuous observation HMM with mixture of Gaussians state distributions, $n'=n$, and $m'=m$, for $d$ the observation dimensionality, $m$ and $m'$ the number of Gaussians in the mixtures, and $\rho_d$ the cost of inverting a covariance matrix ($\rho_d=d$ for diagonal case).
 \end{enumerate}
\end{lemma}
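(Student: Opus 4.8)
The plan is to reduce the kernel to a forward-type recursion on the product of the two hidden chains, after a one-time precomputation of pairwise ``emission GMMKs.'' Take the feature map on length-$T$ observable sequences to be the tensor product of per-position feature maps, so the sequence kernel is $k(\x,\xp) = \prod_t \kappa(x_t, x'_t)$ for a base kernel $\kappa$ on single observations (the RBF kernel in the continuous case, the discrete kernel of the first example in the discrete case). Writing the HMM densities in their factored form $p(\x) = \sum_{\q} \pi(q_0)\prod_{t\ge 1} A(q_{t-1},q_t)\prod_t B(q_t,x_t)$ and likewise for $p'$, substitute into \eqref{eqn:gmmk}. Since $k$ is a product over positions, the observation integral separates position by position, and position $t$ contributes the factor
\begin{equation}
  \psi(q,q') := \int \int B(q,x)\,B'(q',x')\,\kappa(x,x')\,dx\,dx',
\end{equation}
which is exactly the GMMK between the emission distribution of state $q$ in the first HMM and that of state $q'$ in the second. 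Hence
\begin{equation}
  \mmk(p,p') = \sum_{\q}\sum_{\qp} \pi(q_0)\,\pi'(q'_0)\,\psi(q_0,q'_0)\prod_{t\ge 1} A(q_{t-1},q_t)\,A'(q'_{t-1},q'_t)\,\psi(q_t,q'_t).
\end{equation}

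First I would precompute the $n n'$ numbers $\psi(q,q')$. In the discrete case $\psi(q,q') = \sum_{i,j} B(q,i)\,B'(q',j)\,\kappa(i,j)$ is the discrete-distribution GMMK, costing $O(k^2)$ per pair and $O(n^2 k^2)$ in total when $n'=n$. In the mixture-of-Gaussians case, expanding both emissions as mixtures and using bilinearity of $\psi$ in its two arguments reduces $\psi(q,q')$ to a weighted sum of $m m'$ Gaussian GMMKs, each given in closed form by \eqref{eqn:general-Gaussian-GMMK}; precomputing the covariance inverses once per mixture component (there are $O(nm)$ of them), each Gaussian GMMK then costs $O(\rho_d)$ dense $d\times d$ matrix work, so $\psi(q,q')$ costs $O(m m' \rho_d)$ and the total is $O(n^2 m^2 \rho_d)$ when $n'=n$ and $m'=m$.

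Next I would evaluate the path sum by a forward recursion over the $n n'$ joint states of the product chain. Set $\gamma_0(q,q') = \pi(q)\pi'(q')\psi(q,q')$ and, for $t\ge 1$,
\begin{equation}
  \gamma_t(q,q') = \psi(q,q')\sum_{r,r'} \gamma_{t-1}(r,r')\,A(r,q)\,A'(r',q'),
\end{equation}
so that $\mmk(p,p') = \sum_{q,q'} \gamma_T(q,q')$. Performing the inner double sum directly costs $O(n^2 n'^2)$ per step; instead I would split it, first forming $\eta_t(r,q') = \sum_{r'}\gamma_{t-1}(r,r')A'(r',q')$ in $O(n n'^2)$ time and then $\gamma_t(q,q') = \psi(q,q')\sum_r \eta_t(r,q')A(r,q)$ in $O(n^2 n')$ time. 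This is $O(n^3)$ per step when $n'=n$, hence $O(n^3 T)$ over the whole sequence; adding the precomputation cost of $\psi$ yields the two stated bounds.

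The routine part is the linear algebra hidden inside \eqref{eqn:general-Gaussian-GMMK} for the continuous case. The step that actually matters is recognizing that the product structure of the sequence kernel decouples the emission integrals position by position, turning the kernel into a path sum over the product chain, together with the two-stage factorization of the transition double-sum that keeps each recursion step at $O(n^3)$ rather than the naive $O(n^4)$.
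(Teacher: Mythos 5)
Your proposal is correct and follows essentially the same route as the paper: factor the product-form base kernel position by position to reduce the observation integrals to a precomputed table of pairwise emission GMMKs $\psi(q,q')$, then run a forward recursion on the product chain, which is exactly the update $\bm{\phi} = (\mathbf{A}'^T \bm{\phi} \mathbf{A}) \bullet \bm{\psi}$ in the paper's Figure \ref{fig:mmk-hmm}. Your two-stage splitting of the transition double-sum is just a more explicit account of why each recursion step is $O(n^3)$ rather than $O(n^4)$, a point the paper states only implicitly via the remark that all marginalizations involve at most three variables.
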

\begin{proof}
  The quantity of interest is
  \begin{align}
    \mmk(p, p') = \sum_{\x, \xp, \q, \qp} p(\q, \x) p'(\qp, \xp) k(\x,\xp).
  \end{align}
  We treat the discrete and continuous observation cases simultaneously in the following way. For the discrete case, we use the 1-of-$k$ encoding wherein, if the $t\nth$ observation takes on value $i$ out of $k$ possible values, then $x_t \in \{0, 1\}^k$ and $[x_t]_j = \delta_{i j}$, for all $j \in [k]$. In the derivation for the discrete case below\footnote{For continuous observations, sums become integrals.}, we use the Gaussian RBF base kernel's isotropicity such that the kernel factorizes as $k(\mathbf{x}, \mathbf{x'}) = \prod_{t = 1}^T k(x_t, x'_t)$. Also, the linear chain structure of the HMM graphical model can be used to factorize the expectation
  \begin{align}
    \mmk(p,p')
    = {} & \sum_{q_T, x_T} p(x_T \mid q_T) \sum_{q'_T, x'_T} p'(x'_T \mid q'_T) k(x_T, x'_T)
    \nonumber \\
    & \prod_{t=0}^{T-1} \sum_{q_t, q'_t} p(q_{t+1} \mid q_t) p'(q'_{t+1} \mid q'_t)
    \sum_{x_t, x'_t} p(x_t \mid q_t) p'(x'_t \mid q'_t) k(x_t,x'_t)
    p(q_0) p'(q'_0)
    \nonumber \\
    = {} & \sum_{q_T, q'_T} \psi(q_T, q'_T)
    \prod_{t=0}^{T-1} \sum_{q_t} p(q_{t+1} \mid q_t)
    \sum_{q'_t} p'(q'_{t+1} \mid q'_t) \psi(q_t, q'_t)
    p(q_0) p'(q'_0).
    \label{eqn:last-line}
  \end{align}
  Now, 
  $\psi(q_t, q'_t) = \sum_{x_t, x'_t} p(x_t \mid q_t) p'(x'_t \mid q'_t) k(x_t, x'_t)$
  is itself a GMMK on the state distributions for $q_t$ and $q'_t$. These kernels need be computed only once for each pair of states $(q_t, q'_t)$, yielding cost
  $n^2$
  times the complexity to evaluate this kernel once, which is $O(k^2)$ for the discrete distribution and
  $O(m^2 \rho_d)$
  for a mixture of Gaussians distribution.
  From the factorized structure of the rest of the computation in \eqref{eqn:last-line} we see that it is
  $O(T n^3)$ (see the algorithm in Figure \ref{fig:mmk-hmm}),
  as all $O(T)$ latent variable marginalizations are done over functions of at most 3 variables. 
\end{proof}
\begin{figure}[ht]
  \fbox{
    \begin{minipage}{0.94\linewidth}
      \begin{algorithmic}
        \FOR{$i = 1$ to $n$}
        \FOR{$j = 1$ to $n'$}
        \STATE $\bm{\psi}_{j i} = \mmk(p(x | q=i), p(x' | q'=j))$
        \ENDFOR
        \ENDFOR
        \STATE $\bm{\phi} = \bm{\pi} \bm{\pi}'^T$
        \STATE $\bm{\phi} = \bm{\phi} \bullet \bm{\psi}$
        \FOR{$t=1$ to $T$}
        \STATE $\bm{\phi} = (\mathbf{A}'^T \bm{\phi} \mathbf{A}) \bullet \bm{\psi}$
        \ENDFOR
        \RETURN $\sum_{i = 1}^n \sum_{j = 1}^{n'} \bm{\phi}_{j i}$
      \end{algorithmic}
    \end{minipage}
  }
  \caption{\label{fig:mmk-hmm}
    \small We show how to compute the GMMK for HMMs. $\bullet$ is the Hadamard product, $[\mathbf{A}]_{i j} = P(q_{t+1} = j \mid q_t = i)$, and $[\bm{\pi}]_i = P(q_0 = i)$.
  }
\end{figure}

Generally, sequences for which one would like a similarity measure are of different lengths, precluding computation of a kernel without resorting to truncation or other compromises. Even for sequences of the same length, \textit{a priori} there is no reason why the sample indices of the sequences should be considered aligned; application of standard kernels invariably relies upon distance computations made between mismatched random variables.
The GMMK addresses this issue by first learning a generative model for each sequence and then performing kernel computations on the expected sequences that result from each generative model. While sequences drawn from similar distributions can appear to be very different due to the stochastic nature of their generation, by using a measure between the distributions of sequences themselves, we bypass this problem and achieve a more robust similarity measure.

\subsubsection{Linear dynamic systems}
The GMMK also can be computed for linear dynamic systems of the form
\begin{align}
q_{t+1} &= A q_t + w_t & w_t &\sim \mathcal{N}(\mathbf{0}, \mathbf{I})\footnotemark \\
x_t      &= C q_t + v_t  & v_t &\sim \mathcal{N}(\mathbf{0}, R),
\end{align}
\footnotetext{Note that we can use identity covariance ($\mathbf{I}$) without loss of generality (as per footnote 4 of \cite{roweis1999url}).}
where $A: \real^k \rightarrow \real^k$ and $C: \real^k \rightarrow \real^n$ are linear operators and $R $ is a covariance matrix.

The computation follows almost directly from the formulation of Jebara and Kondor \cite{jebara2004ppk} for the probability product kernel (described in Section \ref{sec:relation}). Briefly, for $p$ and $p'$ being probability measures over linear dynamic systems, $\mmk(p,p')$ can be shown to be the GMMK between two Gaussians. In particular, the two Gaussians are
$\mathcal{N}(\mu_x, \Sigma_{xx}) \text{ and } \mathcal{N}(\mu_{x'}, \Sigma_{x'x'})$,
where $\mu_x = (\mu_{x_0}, \ldots \mu_{x_T})$, $\Sigma_{xx}$ is a block diagonal matrix with blocks $\Sigma_{x_t,x_t}$, and we have the following recursive updates:
\begin{align*}
\mu_{q_0} := \mu && \mu_{q_t} &:= A \mu_{q_t}   &&& \Sigma_{q_{t+1},q_{t+1}} &:= A \Sigma_{q_t,q_t} A^T + I \\
                           && \mu_{x_t} &:= C \mu_{q_t}   &&& \Sigma_{x_t,x_t} &:= C \Sigma_{q_t,q_t} C^T + R.
\end{align*}

\subsubsection{Kernel density estimators}
The GMMK can be used as a kernel on nonparametric density estimators. The idea of kernels between density models of sets has been explored previously by Jebara and Kondor \cite{kondor2003kbs} with the Bhattacharyya kernel . Whereas they implicitly map the data to an RKHS using the Gaussian kernel and then learn single Gaussian models in the feature space, here we use kernel density estimators (KDEs) in the original space. An advantage of using kernel density estimation is that it is known to be consistent \cite{tsybakov2008ine}.

Let $\hat{f_z}(z)$ be a KDE
\begin{align}
\hat{f_z}(x) = \frac{1}{m_z} \sum_{i=1}^{m_z} k_{h_x}(z_i, x),
\end{align}
where $h_x$ is the bandwidth.
The GMMK between two Gaussian RBF KDEs $\hat{f_x}$ on observations $X = (x_1, \ldots, x_{m_x})$ and $\hat{f_y}$ on observations $Y = (y_1, \ldots, y_{m_y})$ is then
\begin{align}
\langle \mu[\hat{f}], \mu[\hat{f}'] \rangle
= \E_{ \substack{x \sim \hat{f} \\ x' \sim \hat{f}'} }[k(x,x')]
= \frac{1}{m_x} \frac{1}{m_y} \sum_{i=1}^{m_x} \sum_{j=1}^{m_y} \int k_{h_x}(x_i, x)
\int k_{h_y}(y_j, x') k(x,x') \, dx \, dx'. \label{eqn:KDE-kernel-integrals} 
\end{align}

For $k$ the Gaussian RBF kernel, this expression only requires evaluations of the GMMK on isotropic Gaussians. The form for general Gaussians is in \eqref{eqn:general-Gaussian-GMMK}. For two $N$-dimensional isotropic Gaussians  $\mathcal{N}(\mu, h I)$ and $\mathcal{N}(\mu', h' I)$, the GMMK admits the more pleasant form
\begin{align}
\frac{1}{(1 + \lambda (h + h'))^{N/2}} \exp \left( -\frac{1}{2} \frac{\lambda \| \mu - \mu' \|^2}{1 + \lambda (h + h')} \right)
= \frac{1}{h_0^{N/2}} \exp \left(-\frac{1}{2} \frac{\lambda \| \mu - \mu' \|^2}{h_0} \right) \label{eqn:isotropic-Gaussian-GMMK},
\end{align}
where $h_0 := 1 + \lambda (h + h')$.
Substituting \eqref{eqn:isotropic-Gaussian-GMMK} for the integrals in \eqref{eqn:KDE-kernel-integrals} yields KDE GMMK closed form
\begin{equation}
\frac{1}{m_x m_y h_0^{N/2}}
\sum_{i=1}^{m_x} \sum_{j=1}^{m_y}
 \exp \left(- \frac{1}{2} \frac{\lambda \|x _i- y_i\|^2}{h_0} \right).
\end{equation}

\section{The Latent Mean Map Kernel} \label{sec:latent-MMK}
\subsection{Empirical mean map limitations}
By relaxing the dependency models of the empirical mean map kernel to include latent variables, we generalize the kernel to include richer dependency models such as dynamic Bayesian networks
and hidden Markov random fields. As with the non-iid empirical mean map, we need only apply the mean map to the distribution of each maximal clique. The maximal cliques now fall into two sets: fully observable cliques and cliques containing at least one latent random variable. The distributions for the former cliques can be computed empirically similar to \cite{zhang2009kmi}; however, applying the empirical mean map to the distributions of the latter cliques is impossible due to the latent variables.

\subsection{The Latent mean map}
The latent mean map augments the empirical mean map by using the posterior distribution of the latent variables (with respect to a model specified by $\theta$), conditional on the observed variables. For conciseness, all expectations in this section implicitly are made with respect to a single model $\theta$; this $\theta$ should be estimated from the examples, or a subset of the examples, that are being embedded into a Hilbert space.

Let $(u,v)$ be the concatenation of the components of vectors $u$ and $v$ to form a higher dimensional vector. For observed variables $X$, latent variables $Y$, and clique-restricted subsets $X_c$ and $Y_c$ , the latent mean map of $(X_c,Y_c)$ is
\begin{align}
\mu_c[(X_c, Y_c)]
= \E_{(X_c,Y_c) \mid x_{1:m}} \left[ \phi_c((X_c, Y_c)) \right]
= \frac{1}{m_c} \sum_{i=1}^{m_c} \E_{Y_c^{(i)} \mid x_{1:m}} \left[ \phi_c \left( \bigl( x_c^{(i)}, Y_c^{(i)} \bigr) \right) \right] \label{eqn:latent-mean-map} \\
\end{align}
for
$Y_c^{(i)} \sim P(Y_c^{(i)} \mid x_{1:m})$, the posterior distribution of the random variable $Y_c^{(i)}$ conditioned on \textit{all} observations $x_{1:m}$.
This expression captures our best estimate of the clique distribution.

From \eqref{eqn:latent-mean-map}, the latent mean map kernel
\begin{align}
\sum_{c \in \C} \langle \mu_c[(X_c, Y_c)], \mu_c[(X'_c, Y'_c)] \rangle
\end{align}
expands to
\begin{align}
\sum_{c \in \C}
\frac{1}{m_c m'_c}
\sum_{i=1}^{m_c} \sum_{j=1}^{m'_c}
\E_{\substack{Y_c^{(i)} \mid x_{1:m} \\ Y_c'^{(j)} \mid x'_{1:m'}}}
\left[
v_c \left( \bigl( x_c^{(i)}, Y_c^{(i)} \bigr), \bigl( x_c'^{(j)}, Y_c'^{(j)} \bigr) \right)
\right].
\end{align}

Our end goal is to compute the kernel on many object pairs for SVM classification or kPCA, but even moderate $m_c$ and $m'_c$ render the above expectation intractable. An often exploited trick of kernel methods is the ability to compute inner products in a potentially infinite dimensional space without the need for explicit representations in that space. Here, however, an approximate explicit representation
yields computational tractability by allowing us to work with the efficient form in \eqref{eqn:latent-mean-map}.
For example, the Gaussian RBF kernel on univariate continuous data admits a truncated Taylor expansion of the exponential \cite[Theorem 4.6]{steinwart2006edr}, empirically yielding low error for low order truncations \cite{xu2006ecr}. For multivariate data, two recent explicit representations approximate the RKHS using random features, with error decreasing exponentially in the number of features chosen \cite{rahimi2007rfl}.

It may be useful to use nonlinear representations even in the space of distributional Hilbert space embeddings.
Given a
latent mean map kernel matrix $K$, this can be accomplished by using the alternate kernel matrix $\tilde{K}$ such that
\begin{align}
\tilde{K}_{i j} = \exp(-\nu (K_{i i} - 2 K_{i j} + K_{j j})),
\end{align}
for some parameter $\nu$.
In our LMMK experiments, we push $\lambda$ toward infinity and consider different values of $\nu$ rather than $\lambda$.

\subsection{Latent mean map of HMMs}
Learning using the latent MMK requires a dependency model to induce latent variables and a set of conditional distributions sufficient for the model. This model identifies a set of maximal cliques and
allows us to compute the posteriors.
Suppose we have an HMM $\theta$
as described earlier.
Assuming stationarity, the model's maximal cliques $(x_t, q_t)$ and $(q_t, q_{t+1})$ yield $T$ instances of the former and $T-1$ instances of the latter clique:
\begin{align*}
\mu_{xq}[(x_t, q_t)]
&= \frac{1}{T} \sum_{i=1}^{T}
\E_{Q_t \mid x}
\left[
\phi_{xq}((x_t, Q_t))
\right]
= \frac{1}{T} \sum_{t=1}^{T}
\sum_{i=1}^N \Prob(Q_t = i \mid x) \phi_{xq}((x_t, Q_t))
\\
\mu_{qq}[(q_t, q_{t+1})]
&= \frac{1}{T-1} \sum_{i=1}^{T-1}
\E_{Q_t, Q_{t+1} \mid x, \theta}
\left[
\phi_{qq}((Q_t, Q_{t+1}))
\right]
\\
&=\frac{1}{T-1} \sum_{t=1}^{T-1}
\sum_{i=1}^N \sum_{j=1}^N
\Prob(Q_t = i, Q_{t+1} = j \mid x, \theta)
\phi_{qq}((Q_t, Q_{t+1})).
\end{align*}
The forward-backward algorithm can compute the conditional probabilities \cite{rabiner1989thm}. We adopt Rabiner's notation \cite{rabiner1989thm} for the conditional probabilities so that we have
\begin{align}
\gamma_t(i) = \Prob(Q_t = i \mid x, \theta),
\end{align}
\begin{align}
\xi_t(i,j) = \Prob(Q_t = i, Q_{t+1} = j \mid x, \theta).
\end{align}
We use the joint kernel on cliques
$v_c((x_c, y_c),(x_c', y_c')) = k_c(x_c, x_c') l_c(y_c, y_c')$
such that
\begin{align}
v_{xq}((x_t, q_t), (x'_t, q'_t)) = k_x(x_t, x'_t) l_q(q_t, q'_t),
\end{align}
and likewise
\begin{align}
v_{qq}((q_t, q_{t+1}), (q'_t, q'_{t+1})) = l_q(q_t, q'_t) l_q(q_{t+1},q'_{t+1}).
\end{align}
As before, we use the 1-of-$k$ encoding to treat the discrete and continuous cases identically. The kernel $k$ will be the Gaussian RBF kernel.
For an $N$-state latent space, $K$ possible symbols, and $\gamma_{(c)}(j) := \sum_{t:x_t = c}^T \gamma_t(j)$, the kernel $v_{xq}$ is
\begin{align}
\frac{1}{T T'}
\sum_{\substack{a \in [K] \\ i \in [N]}}
\gamma_{(a)}(i)
\Biggl(
\gamma'_{(a)}(i)
+ e^{-\lambda} \sum_{j \in [N] \setminus i}
\gamma'_{(a)}(j)
+ e^{-\lambda} \sum_{b \in [K] \setminus a}
\biggl(
\gamma'_{(b)}(i)
+ e^{-\lambda} \sum_{j \in [N] \setminus i}
\gamma'_{(b)}(j)
\biggr)
\Biggr),
\end{align}
which has $O(N^2 K^2 T)$ complexity.
The continuous observation case of $v_{xq}$ is
\begin{align}
\frac{1}{T T'}
\sum_{i=1}^N
\Biggl(
\biggl\langle
\sum_{s=1}^T \gamma_s(i) \phi(x_s)
,
\sum_{t=1}^{T'} \gamma'_t(i) \phi(x'_t)
\biggr\rangle
+
e^{-\lambda}
\sum_{j=1}^N
\biggl\langle
\sum_{s=1}^T \gamma_s(i) \phi(x_s)
,
\sum_{t=1}^{T'} \gamma'_t(j) \phi(x'_t)
\biggr\rangle
\Biggr)
\label{eqn:continuous-vxq-explicit}
\\
= \frac{1}{T T'}
\Biggl(
\biggl(
\sum_{s,t}
k(x_s, x'_t)
\sum_{i=1}^N
\gamma_s(i) \gamma'_t(i)
\biggr)
+
e^{-\lambda}
\biggl(
\sum_{s,t}
k(x_s, x'_t)
\sum_{i,j}
\gamma_s(i) \gamma'_t(j)
\biggr)
\Biggr).
\label{eqn:continuous-vxq-implicit}
\end{align}
Whereas \eqref{eqn:continuous-vxq-implicit} is $O(N^2 T^2)$, this can be reduced by explicitly representing RKHS elements in \eqref{eqn:continuous-vxq-explicit}.
Defining $\xi(i,j) := \frac{1}{T-1} \sum_{t \in T} \xi_t(i,j)$, the kernel on the random variable clique $(q_t, q_{t+1})$ $v_{qq}$ is
\begin{align}
\sum_{ \substack{i \in [N] \\ j \in [N]} }
\xi(i,j)
\Biggl(
\xi'(i,j)
+
e^{-\lambda} \sum_{j' \in [N] \setminus j} \xi'(i,j')
+
e^{-\lambda}
\sum_{i' \in [N] \setminus i} \biggl(
\xi'(i',j)
+
e^{-\lambda} \sum_{j' \in [N] \setminus j} \xi'(i',j')
\biggr)
\Biggr).
\end{align}
Interestingly, as $\lambda$ approaches infinity, the LMMK on HMMs takes a form similar to the plain Fisher kernel on HMMs \cite{jaakkola1999egm} (i.e., when the Fisher information matrix is replaced by the identity matrix).
From our results it will appear that the differences in the computation of two kernels significantly affect their performance.

\section{Related Work} \label{sec:relation}
\subsection{Probability product kernel}
For probability measures $p$ and $p'$ and $x \in \X$, the probability product kernel \cite{jebara2004ppk} is
\begin{align}
\ppk(p, p') = \int_\X p(x)^\rho p'(x)^\rho dx.
\end{align}
The probability product kernel (PPK) is a special case of the GMMK. 

For the case where $\rho = 1$, the following Proposition easily follows.
\begin{proposition}
The probability product kernel with $\rho = 1$ is a special case of the generative mean map kernel with convergence exponential in $\lambda$ as $\lambda \rightarrow \infty$.
\end{proposition}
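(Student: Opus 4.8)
The plan is to recognize that, with the Gaussian RBF base kernel, the GMMK is the bilinear pairing $\langle p,p'\rangle=\int p(x)p'(x)\,dx$ smeared against a mollifier of width $1/\lambda$, and that this mollifier collapses to a point mass as $\lambda\to\infty$; strictly speaking $\ppk_{\rho=1}$ is then the limiting (suitably rescaled) GMMK rather than a literal special case, and the proposition's phrase ``convergence exponential in $\lambda$'' is the quantitative half of the statement. First I would write $k(x,x')=\exp(-\tfrac12\lambda\|x-x'\|^2)=(2\pi/\lambda)^{d/2}\,g_{1/\lambda}(x-x')$, where $g_\sigma$ is the $\mathcal{N}(\mathbf 0,\sigma I)$ density, and substitute into \eqref{eqn:gmmk}. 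Using the symmetry of $g_{1/\lambda}$ this gives $(\lambda/2\pi)^{d/2}\,\mmk(p,p')=\int p'(x')\,(p * g_{1/\lambda})(x')\,dx'$, and since $g_{1/\lambda}\to\delta$ weakly, $p * g_{1/\lambda}\to p$ (under mild integrability of $p,p'$), so $(\lambda/2\pi)^{d/2}\,\mmk(p,p')\to\int p(x)p'(x)\,dx=\ppk(p,p')$ at $\rho=1$. That establishes the ``special case'' half.

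To pin down the exponential rate I would specialize to the discrete case, where no rescaling is needed and the rate is transparent. For $p,p'$ discrete on $k$ symbols with mean parameters $\alpha,\alpha'$ under the $1$-of-$k$ encoding, the formula from Section~\ref{sec:generative-MMK} reads $\mmk(p,p')=\sum_{i,j}\alpha_i\alpha'_j\exp(-\lambda(1-\delta_{ij}))$. Separating the diagonal ($i=j$) and off-diagonal ($i\neq j$) sums and using $\sum_{i,j}\alpha_i\alpha'_j=1$, this equals $\sum_i\alpha_i\alpha'_i+e^{-\lambda}\bigl(1-\sum_i\alpha_i\alpha'_i\bigr)$; identifying $\sum_i\alpha_i\alpha'_i=\int p(x)p'(x)\,dx=\ppk(p,p')$ (counting measure, $\rho=1$) yields $\mmk(p,p')=\ppk(p,p')+e^{-\lambda}\bigl(1-\ppk(p,p')\bigr)$, hence $0\le\mmk(p,p')-\ppk(p,p')\le e^{-\lambda}$. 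This is exactly convergence exponential in $\lambda$.

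The main obstacle is that the exponential rate is genuinely unavailable for arbitrary continuous $p,p'$: the mollification error $\langle p',\,p * g_{1/\lambda}-p\rangle$ is in general only polynomial in $1/\lambda$ — for instance the rescaled isotropic-Gaussian closed form \eqref{eqn:isotropic-Gaussian-GMMK} approaches its Gaussian-PPK limit at rate $O(1/\lambda)$. What produces the exponential rate is a uniform separation of the supports by some margin $r>0$, which forces every off-diagonal contribution to carry a factor $e^{-\frac12\lambda r^2}$, the discrete $1$-of-$k$ case ($r^2=2$) being the canonical instance. Accordingly, in the write-up I would either state the rate for that margin-separated (in particular, discrete-support) family, or present the limit identity in full generality and restrict the ``exponential in $\lambda$'' assertion to the discrete specialization, which is presumably the reading the word ``easily'' has in mind.
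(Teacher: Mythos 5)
Your proposal is correct, and for the ``special case'' half it follows essentially the same route as the paper: the paper also rescales the Gaussian base kernel into an approximate identity (writing $k_G(x,x') = \sqrt{\lambda}\exp(-\lambda\|x-x'\|^2)$, changing variables to $(a,b)=(x,x-x')$, and letting the mollifier collapse to a delta), which is your convolution argument in different notation --- though your normalization $(2\pi/\lambda)^{d/2}$ is the correct one, whereas the paper's $\sqrt{\lambda}$ factor is off by a constant ($\int\sqrt{\lambda}e^{-\lambda b^2}\,db = \sqrt{\pi}$, not $1$) and is stated only for the univariate case. Where you genuinely add value is the second half: the paper's proof stops at the limit identity and never substantiates the ``convergence exponential in $\lambda$'' clause of the proposition at all. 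Your discrete computation $\mmk(p,p') = \ppk(p,p') + e^{-\lambda}\bigl(1-\ppk(p,p')\bigr)$ is the cleanest way to make that clause true, and your observation that the rate degrades to $O(1/\lambda)$ for general continuous densities (visible directly from the isotropic-Gaussian closed form \eqref{eqn:isotropic-Gaussian-GMMK}) is a correct and worthwhile caveat that the paper does not make --- as stated, the proposition's rate claim holds only for margin-separated supports, and your write-up is right to restrict it accordingly.
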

\begin{proof}
We use the convergence of the scaled\footnote{This scaling is only for the technical reason of the limit in the last line of the proof.} Gaussian kernel $k_G(x,x) = \sqrt{\lambda} \exp(-\lambda \|x - x'\|^2)$ to the identity kernel $k_\delta(x,x') = \delta(x - x')$ as $\lambda \rightarrow \infty$.
The PPK $\displaystyle \lim_{\lambda \rightarrow \infty} \mmk(p, p')$ expands to
\begin{align*}
\lim_{\lambda \rightarrow \infty} \int_\X \int_\Xp p(x) p'(x') k_G(x,x') dx \, dx'
= \lim_{\lambda \rightarrow \infty} \int_\X \int_\Xp p(x) p'(x') \sqrt{\lambda} \exp(-\lambda \|x - x'\|^2) \, dx \, dx'.
\end{align*}
Now, using $(x,x') \mapsto (a,b) := (x, x - x')$ and making the substitution $\sigma := \frac{1}{\sqrt{\lambda}}$ yields
\begin{align*}
\lim_{\sigma \rightarrow 0} \int_{\mathcal{A}} \int_{\mathcal{B}} p(a) p'(a - b) \frac{1}{\sigma} \exp(-\|b\|^2/\sigma^2) \, da \, db = \int_{\mathcal{A}} p(a) p'(a) da = \ppk(p, p'). & \qedhere
\end{align*}

\end{proof}
Further, it is possible to express a GMMK analog to the PPK for the case of $\rho \neq 1$; however, the expectation operator is fundamental to the GMMK's derivation and this operator requires $\rho=1$. Provided that the GMMK can be computed for the distribution of the observed random variables, graphical models for which the PPK is computable are also computable for the GMMK; this can be seen by observing that the Gaussian kernel couples each pair of observed variables $x_i$ and $x'_i$ into a 2-clique. In the clique graph used by the junction tree algorithm, this 2-clique appears wherever $x_i$ appears in the PPK's clique graph.

Another connection between the GMMK $k_m(p,p')$ and the PPK $k_p$ is that $k_m$ is an expectation of $k_p$ between one fixed distribution and an isotropic Gaussian centered at the points drawn from the other distribution. For $\sigma = (2 \lambda)^{-1/2}$, we easily see that
\begin{align}
\E_{x \sim p'}[k_p(p,\mathcal{N}(x, \sigma^2))] = \E_{x \sim p}[k_p(p',\mathcal{N}(x, \sigma^2))].
\end{align}

\subsection{Other related kernels}
As mentioned earlier, the concept of embedding probability distributions into Hilbert spaces via the expectation of a feature map dates back to Guilbart \cite{guilbart1979pse}. The concept was explored further by Suquet \cite{suquet1995dem,suquet2008rkh}\footnote{Guilbart and Suquet actually consider signed measures.}. Hein and Bousquet \cite{hein2005hmp} later generalized this kernel by replacing the scalar product of the measures by a positive definite function between the measures; however, their work did not discuss the learning framework for the case of structured data such as images and time series, and so far the complexity of the associated RKHS has not been quantified in terms amenable for generalization error guarantees for empirical risk minimization. The expectation of a feature map is similar to the marginalized kernel \cite{tsuda2002mkb}, although that work does not discuss RKHSs and  focuses on count kernels rather than higher order (Taylor type) kernels such as the Gaussian RBF kernel. The derivation of the GMMK also is different, coming from the direction of Hilbert space embeddings of distributions to arrive at a kernel with good regularization properties.

The EMMK relies upon observing the labels in order to include them in a mean map of the full joint distribution over observed variables and labels (latent variables). Without learning a generative model with latent state variables, the EMMK is limited to graphical model dependencies between only the observed variables (see the discussion in Section \ref{sec:latent-MMK}).

To our knowledge, the Bhattacharyya affinity is the earliest form of a similarity measure between probability distributions \cite{bhattacharyya1943mdb}, whereas the Fisher kernel \cite{jaakkola1999egm} is the earliest one used as a machine learning method. The kernel is based on the score $\nabla_{\theta} \log p(X \mid \hat{\theta})$ for $\hat{\theta}$ the maximum likelihood estimate of a model. The Fisher kernel is sensitive to the parameterization of the statistical family used and is not easily computable without using a surrogate for the Fisher information matrix. The heat kernel \cite{lafferty2005dks} is not sensitive to the parameterization, but it is rarely computable in closed form; however, it can be approximated under certain conditions to yield a positive definite kernel, e.g. by using leading terms in the parametrix expansion for small time parameter $t$. Although it has not yet been shown, the heat kernel may be (approximately via the parametrix expansion) computable for HMMs with multinomial observation distributions.

Rational kernels \cite{cortes2004rational} are another class of kernels that can handle variable-length sequences. Unlike rational kernels, the GMMK is not restricted to observations constituted by a finite alphabet (as is clear from the GMMK on probability distributions over $\real^n$).

A possible weakness of kernels which do not take make use of the structure of a probability space is that they treat all distributions with disjoint support identically. It is not difficult to show that the Bhattacharyya affinity and general probability product kernels produce a similarity of 0 for distributions with disjoint support. For the case of kernels based off of the Kullback-Leibler-divergence, a similar result trivially holds: 
\begin{align*}
KL(\theta_1 || \theta_2)
= E_{\theta_1} \log \frac{p(x \mid \theta_1)}{p(x \mid \theta_2)}
&= \int_{\X} p_{\theta_1}(x) \log(p_{\theta_1}(x)) dx  - \int_{\X} p_{\theta_1}(x)  \log(p_{\theta_2}(x)) dx \\
&= -H(p_{\theta_1}) + \log 0 + 0 \rightarrow \infty.
\end{align*}
Incorporating smoothness into a similarity measure can alleviate this problem. In fact, we can show that the smoothing property can provide theoretical guarantees with respect to complexity of the RKHS.

\section{Statistical Learning Bounds} \label{sec:theory}

To better understand the GMMK $k_m$ from a statistical learning theory perspective, we first explore the complexity of the hypothesis space induced by this kernel. 
For simplicity of the analysis, we restrict our analysis to kernels on symmetric, univariate location-family probability distributions with a bounded location parameter.
This restriction ensures that the kernel is translation invariant with respect to the location family parameter $\theta \in \Theta \subset \real$, where $\Theta$ is compact. Recall that location families are parameterized by a location parameter $\theta$, so we have that the density function $f_\theta(x) = f_0(x - \theta)$. With this definition, we can view the kernel as a function $k: \Theta \times \Theta \rightarrow \real$, where $\Theta \subset \real$ is the space of the location family parameter.

\subsection{Covering number bounds on the RKHS}
Prior to bounding the complexity of the hypothesis space induced by the GMMK, we need to introduce some notation. Let $\hilb$ be the RKHS corresponding to the generative mean map kernel $k$ restricted to the above location family domain. Denote by $B_\hilb(R)$ the radius $R$ ball of $\hilb$:
\begin{align}
B_\hilb(R) := \{f \in \hilb: \|f\|_\hilb \leq R \},
\end{align}
and let $B_\X := B_\X(1)$ be the unit ball in some metric space $\X$.
We define $I_K$ to be the inclusion
\begin{align}
\id: \hilb \rightarrow C(X).
\end{align}
Recall that the $\epsilon$-covering number $\mathcal{N}(M, \epsilon)$ of a subset $M$ of a metric space $(X,d)$ is defined as the minimal $n \in \mathbb{N}$ such that there exist $n$ balls in $X$ of radius $\epsilon$ that cover $M$; more formally we have
\begin{align}
\mathcal{N}(X, d, \epsilon) := \inf \{n \in \mathbb{N}: \exists \{x_1, \ldots, x_n\} \subset X \text{ satisfying } M \subseteq \bigcup_{i=1}^n B_d(x_i,\epsilon) \},
\end{align}
where (with mild notation abuse) $B_d(x_0, \epsilon) := \{x \in X: d(x, x_0) \leq \epsilon \}$.
To simplify notation, we may express $\epsilon$-covering numbers as $\mathcal{N}(E, \epsilon)$ for a Banach space $E$, where the metric is taken to be corresponding norm for $E$.
Finally, for an operator $T: E \rightarrow F$ between two Banach spaces $E$ and $F$, the covering numbers of the $T$ are defined as
\begin{align}
\mathcal{N}(T, \epsilon) := \mathcal{N}(T B_E, \epsilon).
\end{align}

For convolutional kernels, Zhou \cite{zhou2002cnl} showed that when $F[k]$ (i.e. the Fourier transform of the convolutional form of $k$ shown below) decays exponentially, the covering number of this operator when acting on a radius-$R$ ball in $\hilb$ satisfies
\begin{align}
\label{eqn:covering-number-bound}
\ln \mathcal{N}(\overline{I_K(B_\hilb(R))}, \|\cdot\|_\infty, \eta) \leq c_{k,n} \left( \ln \frac{R}{\eta} \right)^{n+1}
\end{align}
for a constant $c_{k,n}$ depending only on the kernel and the dimension $n$ of the domain. In our case, $n = 1$.

We first show that the Fourier transform indeed indeed does decay exponentially for $k$, and then we briefly summarize a result from Steinwart and Christmann \cite{steinwart2008svm} to arrive at a generalization error bound for SVM learning.

Let $\sigma = (2 \lambda)^{-1/2}$ the variance of the Gaussian RBF function used as the base kernel in $k$. We have the following
\begin{theorem}
The Fourier transform of the the convolutional form of $k$ decays exponentially as
\begin{align}
F[k](\omega) = P(\omega)^2 \exp(-\omega^2 \sigma^2/2). \label{eqn:exp-decay}
\end{align}
\end{theorem}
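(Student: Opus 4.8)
The plan is to make the location-family structure explicit, recognize the kernel as a triple convolution in the location parameter, and then read off the Fourier transform from the convolution theorem together with the known transform of a Gaussian.

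First I would write the Gaussian RBF base kernel as $k_G(x-x') = \exp(-\lambda(x-x')^2) = \exp\!\big(-(x-x')^2/(2\sigma^2)\big)$ with $\sigma^2 = (2\lambda)^{-1}$, so that the GMMK between $f_\theta(x)=f_0(x-\theta)$ and $f_{\theta'}(x)=f_0(x-\theta')$ is $k(\theta,\theta') = \iint f_0(x-\theta)\,f_0(x'-\theta')\,k_G(x-x')\,dx\,dx'$. The substitution $u=x-\theta$, $v=x'-\theta'$ turns the argument of $k_G$ into $u-v+(\theta-\theta')$, which exhibits $k$ as a function $\kappa(\theta-\theta')$ of the difference of parameters alone; this $\kappa$ is ``the convolutional form of $k$'' named in the statement. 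Carrying out the $u$-integral and then the $v$-integral (Fubini applies, since $f_0 \in L^1$ is a density and $k_G$ is bounded) and using that $f_0$ is symmetric identifies $\kappa = f_0 * f_0 * k_G$.

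Next I would apply the convolution theorem: $F[\kappa](\omega) = F[f_0](\omega)^2\,F[k_G](\omega)$. Since a Gaussian transforms to a Gaussian, $F[k_G](\omega) = \sqrt{2\pi\sigma^2}\,\exp(-\omega^2\sigma^2/2)$ under the convention $F[g](\omega)=\int g(t)e^{-i\omega t}\,dt$. Absorbing the constant, set $P(\omega) := (2\pi\sigma^2)^{1/4}\,F[f_0](\omega)$; because $f_0$ is even, $F[f_0]$ --- hence $P$ --- is real, and because $f_0$ is a density, $|P(\omega)| \le (2\pi\sigma^2)^{1/4}\|f_0\|_1 = (2\pi\sigma^2)^{1/4}$ uniformly in $\omega$. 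This gives exactly $F[k](\omega) = P(\omega)^2\exp(-\omega^2\sigma^2/2)$, and the boundedness of $P$ shows the right-hand side is dominated by a constant multiple of $\exp(-\omega^2\sigma^2/2)$, so $F[k]$ decays at least exponentially --- which is the hypothesis needed to invoke \eqref{eqn:covering-number-bound}. As a sanity check, $F[k] = P^2\exp(-\omega^2\sigma^2/2) \ge 0$, consistent with $k$ being pd by Bochner's theorem.

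The main difficulty here is bookkeeping rather than substance: I must ensure that ``the convolutional form of $k$'' is well posed --- that is, rigorously establish translation invariance of $k(\theta,\theta')$ and that the resulting $\kappa$ is a genuine $L^1$ (or at worst tempered-distribution) object whose Fourier transform is meaningful --- and keep the Fourier normalization consistent so that both the leading constant and the $\sigma^2/2$ in the exponent come out as written. One caveat worth a remark: despite the suggestive notation, $P$ need not be a polynomial (for instance it is itself Gaussian when $f_0$ is Gaussian); all that is used downstream is that $P$ is bounded, so the Gaussian factor governs the decay regardless of the choice of $f_0$.
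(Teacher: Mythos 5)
Your proposal is correct and follows essentially the same route as the paper: exhibit $k$ as a function of $\theta-\theta'$ equal to the triple convolution $f_0 * f_0 * k_G$, apply the convolution theorem, and read off the Gaussian factor. You are in fact somewhat more careful than the paper's own proof, which glosses over the normalization constants, the phase factor from translating $p_\theta$ versus $p_0$, and the role of the symmetry of $f_0$ in making $P(\omega)$ real so that $P(\omega)^2$ (rather than $|P(\omega)|^2$) is the right expression.
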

\begin{proof}
We first show that the kernel is convolutional:
\begin{align*}
k(p_\theta, p_{\theta'})
&= \int \int p_\theta(x) p_{\theta'}(x') \frac{1}{\sigma} \exp(-\frac{1}{2}(x-x')^2/\sigma^2) \, dx \, dx' \\
&= \int \int p_\theta(x) p_\theta(x' - \Delta\theta) \frac{1}{\sigma} \exp(-\frac{1}{2}(x-x')^2/\sigma^2) \, dx \, dx' = \tilde{k}(\Delta\theta),
\end{align*}
where $\Delta\theta = \theta' - \theta$. Prior to taking the Fourier transform of $\tilde{k}$, we make a few simplifications:
\begin{align*}
\tilde{k}(\Delta\theta)
&= \int \int p_\theta(x) \frac{1}{\sigma} \exp(-\frac{1}{2}(x'-x)^2/\sigma^2) \, dx \, p_\theta(x' - \Delta\theta) \, dx' \\
&= \int f(x') g(\Delta\theta - x') \, dx',
\end{align*}
where $f(x') := \int p_\theta(x) \frac{1}{\sigma} \exp(-\frac{1}{2}(x'-x)^2/\sigma^2) \, dx$ and $g(\Delta\theta - x') := p_\theta(\Delta\theta - x')$.

In taking the Fourier transform $F[\tilde{k}](\omega) := \mathrm{FT}[\tilde{k}(\theta' - \theta')]$, we apply the convolution theorem to yield $F[k](\omega) = F(x') G(x')$. Now, observe that
\begin{align*}
F(x') :&= \mathrm{FT} \left[ \int p_\theta(x) \frac{1}{\sigma} \exp(-\frac{1}{2}(x'-x)^2/\sigma^2) dx \right] && \\
&= \mathrm{FT} \left[ p_0(x) \right] \mathrm{FT} \left[ \frac{1}{\sigma} \exp(-\frac{1}{2}x^2/\sigma^2) \right] && \textit{(Convolution Theorem)} \\
&= \mathrm{FT}[p_0(x)] \exp(-\omega^2 \sigma^2/2). && \textit{(FT of Gaussian)}
\end{align*}

Define $G(x') := \mathrm{FT}[g(\Delta\theta - x')] = \mathrm{FT}[g(x')]$.
Then \eqref{eqn:exp-decay} follows by defining $P(\omega) := FT[p_0(x)]$.
Hence, the Fourier transform of the kernel operator decays exponentially, independent of the decay of  $P(\omega)$.
\qedhere
\end{proof}

In many cases, we cannot assume that $P(\omega)$ decays rapidly; for the uniform distribution the decay horrifically is not a decay at all (the Fourier transform is periodic). Even for the Laplace distribution, the decay is only quadratic in $\omega$; specifically, the decay is $O(2 \sigma / (a^2 + \omega^2))$. Hence, the Gaussian convolution is critical to the exponential decay; in contrast, the Fourier spectrum of the PPK depends on the Fourier transforms of the distributions \cite{jebara2003bel}. Although yet to be shown, we conjecture that the smoothing via the Gaussian convolution in $k_m$ affords strong regularization properties for more general classes of distributions.

\subsection{SVM Learning bound}

We now show how the above result can be plugged into an oracle inequality for SVM learning with our kernel $k$, by making use of the covering number bound in \eqref{eqn:covering-number-bound}. Define $Y := \{-1,1\}$. Let $\R_{L,Q}$ be the risk defined as $\E_{(x,y) \sim Q}[L(x,y,f(x)]$, for a probability measure $Q$ on $\Theta \times Y$ and a loss function $L: \Theta \times Y \times \real$. Recall that the SVM problem is to minimize the regularized risk functional
\begin{align}
\inf_{\lambda,f \in \hilb} \lambda \|f\|_\hilb^2 + \R_{L,D}(f),
\end{align}
and define
\begin{align}
\R^*_{L,P,\hilb} := \inf_{f \in \hilb} \R_{L,P}(f).
\end{align}

We now can use an oracle inequality such as Theorem 6.25 of Steinwart and Christmann \cite{steinwart2008svm}, restated in slightly different and simplified terms here for convenience:
\begin{theorem}
Denote by $f_{D,\lambda}$ unique minimizer of the SVM problem with regularization term $\lambda$ over empirical distribution $D$.
Let $\Theta \subset \real$ be a compact metric space and $L: \Theta \times Y \times \real \rightarrow [0, \infty]$ be the hinge loss\footnote{Note that the hinge loss is Lipschitz continuous with Lipschitz constant 1 and satisfies $L(\theta, y, 0) \leq 1$ for all $(\theta, y) \in \Theta \times Y$.} $L(x,y,f(x)) = \max\{ 0, 1 - y f(x) \}$. Further, let $\hilb$ denote the RKHS of $k$ and $P$ be a probability measure on $\Theta \times Y$, with $D \sim P^n$ (iid). Then, for fixed $\lambda > 0$, $n \geq 1$, $\epsilon > 0$, and $\tau > 0$, we have with probability greater than $1 - e^{-\tau}$ that
\begin{align*}
\lambda \|f_{D,\lambda}\|_\hilb^2 + \R_{L,P}(f_{D,\lambda}) - \R^*_{L,P,\hilb}
\mbox{ \large $<$ } \,  A_2(\lambda) + 4 \epsilon + (\lambda^{-1/2} + 1)
\sqrt{\frac{2 \tau + 2 \ln (2 \mathcal{N}(B_\hilb, \|\cdot\|_\infty, \lambda^{1/2} \epsilon))}{n}},
\end{align*}
where the approximation error functional $A_2(\lambda) := \inf_{f \in \hilb} \lambda \|f\|_\hilb^2 + \R_{L,P}(f) - \R^*_{L,P,\hilb}$ approaches zero as $\lambda \rightarrow 0$.
\end{theorem}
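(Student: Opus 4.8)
The plan is to obtain the inequality as a direct specialization of the generic SVM oracle inequality, Theorem 6.25 of Steinwart and Christmann \cite{steinwart2008svm}: all one must do is check that its hypotheses are met by the hinge loss, by the compact parameter space $\Theta$, and by the RKHS $\hilb$ of the GMMK, and then carry the generic Lipschitz and boundedness constants of that theorem through to the stated form. No new estimate is needed beyond what is already available from the covering-number machinery set up above.

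First I would verify the loss-function hypotheses. The hinge loss $L(x,y,t) = \max\{0,1-yt\}$ is measurable, convex in $t$, Lipschitz in $t$ with Lipschitz constant $1$, and satisfies $L(\theta,y,0) = 1$ for every $(\theta,y)\in\Theta\times Y$; these are exactly the conditions (Lipschitz continuity together with a supremum bound on the loss at $0$) under which the estimation-error term in \cite{steinwart2008svm} is controlled, and they are what produce the factor $(\lambda^{-1/2}+1)$ and the additive $4\epsilon$ in the conclusion.

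Next I would verify the hypotheses on the input space and kernel. By construction $\Theta\subset\real$ is compact, and --- as already used in deriving \eqref{eqn:covering-number-bound} --- the GMMK $k$ restricted to this location-family domain is bounded and continuous (the Gaussian RBF base kernel is bounded by $1$, and forming mean elements preserves boundedness), so the inclusion $\id:\hilb\to C(\Theta)$ is well defined and bounded. Consequently the sup-norm covering numbers $\mathcal{N}(B_\hilb,\|\cdot\|_\infty,\delta)$ are all finite and dominate the data-dependent $L_2(D)$ covering numbers appearing in the original statement, which is what lets us write the bound in terms of the single quantity $\mathcal{N}(B_\hilb,\|\cdot\|_\infty,\lambda^{1/2}\epsilon)$. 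With these hypotheses in force, the oracle inequality of \cite{steinwart2008svm} yields the displayed bound verbatim. Finally, $A_2(\lambda)\to 0$ as $\lambda\to 0$ follows from its definition: since $\R^*_{L,P,\hilb}=\inf_{f\in\hilb}\R_{L,P}(f)$, for any $\varepsilon>0$ pick $f_\varepsilon\in\hilb$ with $\R_{L,P}(f_\varepsilon)\le\R^*_{L,P,\hilb}+\varepsilon$, so $0\le A_2(\lambda)\le\lambda\|f_\varepsilon\|_\hilb^2+\varepsilon$, which falls below $2\varepsilon$ once $\lambda$ is small enough.

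The main obstacle is not mathematical but one of fidelity: one must make sure the simplified restatement has not silently dropped a hypothesis of the original theorem --- in particular the supremum-bound / variance condition on $L$ and the requirement that the relevant covering numbers be finite --- and that the quantity genuinely governing the estimation error is the sup-norm covering number of the unit ball $B_\hilb$ rather than an empirical $L_2(D)$ covering number. Both points are settled by the boundedness of $k$ on the compact domain $\Theta$, so the restatement is faithful and the argument reduces to the bookkeeping described above.
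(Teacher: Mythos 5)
Your proposal matches the paper's approach exactly: the paper offers no independent proof of this statement, presenting it only as a restatement of Theorem 6.25 of Steinwart and Christmann, and your argument is precisely the specialization of that oracle inequality to the hinge loss, the compact domain $\Theta$, and the bounded GMMK, together with the routine check that $A_2(\lambda) \rightarrow 0$. Your additional care in verifying the Lipschitz and boundedness hypotheses and in justifying the use of the sup-norm covering number is sound and, if anything, more explicit than the paper itself.
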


Finally, we note that for the kernel of interest, under the restriction specified at the beginning of this section, we have the covering number bound
\begin{align}
\ln \mathcal{N}(B_\hilb, \|\cdot\|_\infty, \lambda^{1/2} \epsilon) = \ln \mathcal{N}(I_K, \lambda^{1/2} \epsilon) \leq c_k \left( \ln \frac{1}{\lambda^{1/2} \epsilon} \right)^2,
\end{align}
for a constant $c_k$ depending only on the kernel.

\section{Experimental Results} \label{sec:experiments}
\subsection{Sequence data} We evaluated the GMMK and LMMK through classification of two discrete observation sequence datasets (synthetic and human DNA sequence data) and five continuous observation sequence datasets from the UCR time series dataset repository\footnote{We selected all but one of the 2-class problems from the UCR set. The excluded dataset, Lightning-2, would have required significant preprocessing. No preprocessing aside from scaling was performed.}. We explored the performance of SVMs using the GMMK and the PPK
($\mmk$ for $\lambda \rightarrow \infty$)
on discrete observation HMMs,
as well as the latent MMK using a model learned from one class.
All HMMs were learned via uniform distribution initialization for initial state and state transition probabilities, random initialization of emission probabilities for discrete observations and constrained $k$-means clustering initialization for continuous observations, a segmental clustering update via the Viterbi algorithm \cite{rabiner1989thm}, and execution of the Baum-Welch algorithm until the first of log-likelihood convergence within $10^{-6}$ or 1000 iterations.

For the GMMK we used the expectation of the Gaussian RBF kernel, where in the discrete case each symbol is mapped to a vector using the 1-in-$k$ encoding, and we explored logarithmically spaced settings of the parameter $\lambda$.
For the GMMK we experimented with linearly spaced settings of the witness length $T$. For all kernels, the regularization parameter $C$ was tested at logarithmically spaced values. We compare these results with a maximum-likelihood Bayes classifier which maintains an HMM model for each class, the Gaussian kernel using the metric induced by the Fisher kernel with a model for one class, and the EMMK using Markov models of orders\footnote{An order-$k$ Markov model induces maximal cliques of size $k+1$, with the kernel described in \cite{zhang2009kmi}. An EMMK with this model effectively is a string kernel whose feature space representation consists of counts of each string of length $k+1$.} 1, 2, 3, and 4. We report all results for stratified 10-fold cross-validation.

The synthetic dataset consists of 2 classes. For each class, we manually constructed a 3-state 2-symbol HMM to serve as a sequence generator. 500 binary sequences of length 100 were generated for each class. All HMMs learned had 3 states.
Table \ref{tab:methods-loss} shows the Bayes classifier and the GMMK perform almost equally, whereas the Fisher kernel and empirical/latent MMKs
perform slightly worse. The PPK exhibits much higher loss than the other methods, possibly owing to its higher sensitivity to errors in model estimation.

We also ran two-class sequence classification experiments on a random subset of 500 exons and 500 introns from the HS$^3$D dataset \cite{pollastro2002hhs}.
The sequence lengths vary from order of $10^2$ to $10^4$ symbols in length, making for a challenging classification problem.
For the GMMK and PPK, we adopted a heuristic formula \cite{jebara2004ppk} to choose the number of states $n$ in the HMM learned for a particular sequence: $n = \lfloor\frac{1}{2}\sqrt{k^2 + 4 (T \gamma + k + 1)} - \frac{1}{2} k\rfloor + 1$, for $k$ symbols and a constant $\gamma$ set to 0.1. For the Bayes classifier, we used a 4-state model for the exons and an 8-state model for the introns because this configuration produced the lowest loss.
For
the latent MMK and
the Fisher kernel, we used a 4-state model trained on the exons. For the EMMK, we report results for a Markov dependency model of order $3$, which produced the best results among orders $\{1, 2, 3, 4\}$.

We restrict our results for the GMMK to the setting of $T = 30$ which had the lowest mean loss over all values of $\lambda$. Our results in Table \ref{tab:methods-loss} show that the GMMK performs slightly better than the PPK, both of which outperform the Fisher kernel. The GMMK significantly outperforms the Bayes classifier and the EMMK, and the LMMK performs relatively poorly on this problem.
\begin{table*}[t]
\caption{\label{tab:methods-loss} \small Error for methods on sequence data. $^*$ indicates that a class-balanced random subsample was used. ``X'' indicates that experiments were not performed due to computational intractability.}
\small
\begin{center}
\begin{tabular}{|l l|l|l|l|l|l|l|l|l|}
\hline
\# observations &           & GMMK              & PPK ($\rho = 1$)   & PPK ($\rho = 0.5$) & EMMK                         & LMMK       & Fisher            & Bayes \\
\hline
\tiny(1000) & Synthetic   & 0.063               & 0.091                   & 0.247                     & 0.067\footnotemark  & 0.068        & 0.066             & \textbf{0.062} \\
\hline
\tiny(1000) & HS$^3$D   & \textbf{0.144}  & 0.149                   & 0.165                     & 0.215                         & 0.279        & 0.170            & 0.192 \\
\hline
 \tiny(56)    & Coffee       & 0.232                & \textbf{0.214}     &  0.268                    & 0.286                         & X               & 0.268            & 0.339 \\
\hline
\tiny(200)   & ECG           & 0.250                & 0.250                   & \textbf{0.240}       & 0.320                         & X               & 0.335            & 0.335 \\
\hline
\tiny(200)   & Gun-Point & \textbf{0.165}  & 0.230                  &  0.265                     & 0.185                         & X               & 0.385            & 0.235 \\
\hline
\tiny(1524$^*$) & Wafer & \textbf{0.077}  & 0.231                  &  0.144                    & 0.173                          & X               & 0.104             & 0.245 \\
\hline
\tiny(1000$^*$) & Yoga  & \textbf{0.304}  & \textbf{0.304}    &  0.341                    & 0.340                          & X               & 0.305             & 0.422 \\
\hline
\end{tabular}
\end{center}
\end{table*}

\footnotetext{EMMK results on synthetic data were unstable with respect to $\lambda$.}
\begin{figure}[t]
 \begin{center}
     \label{fig:species-manifold}
      \includegraphics[width=75mm]{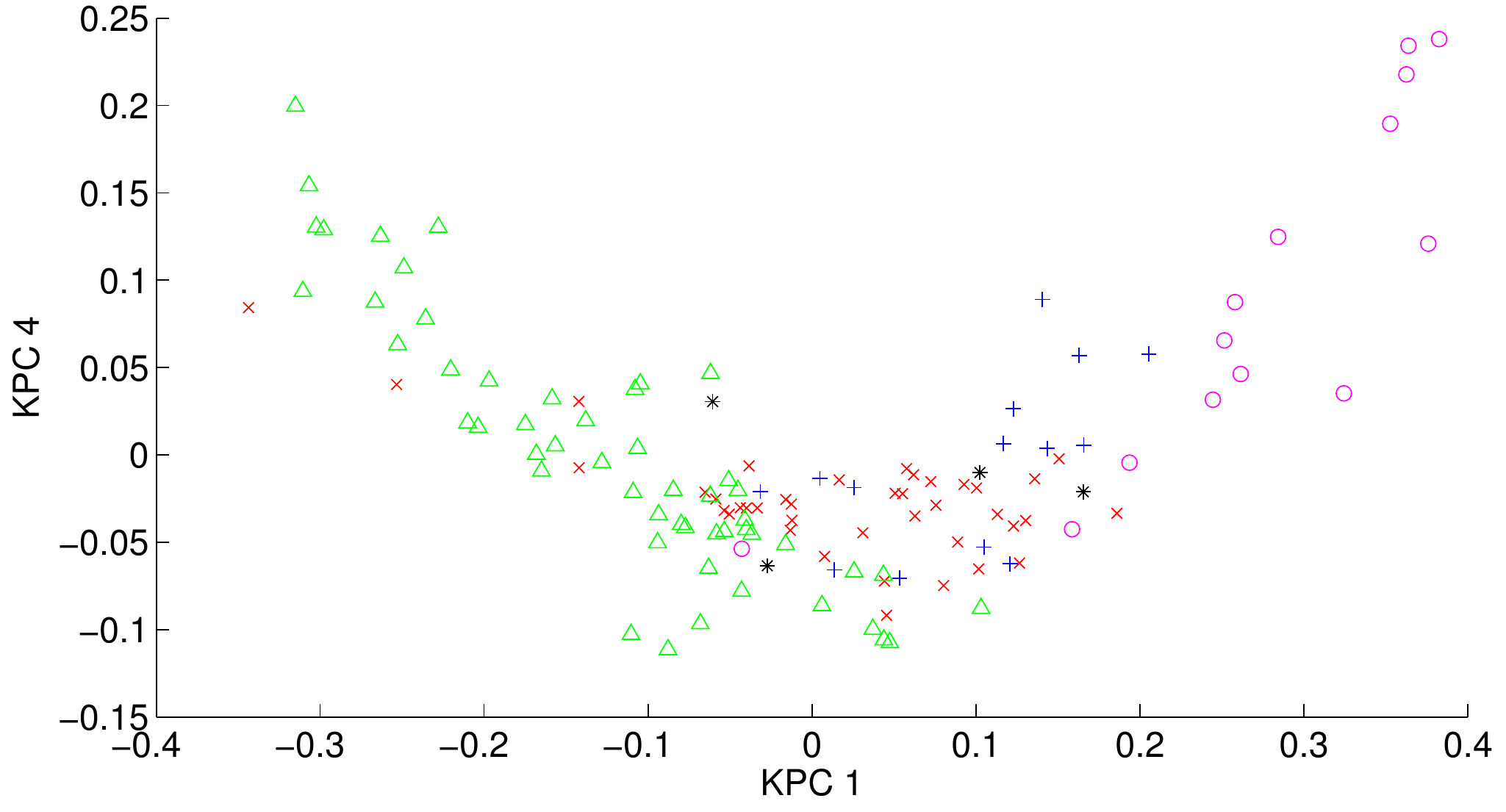}
 \end{center}
  \caption{\label{fig:hs3d-loss} \small
1$\st$ vs 4$\nth$ kernel principal components (kPCs) of the GMMK matrix on INBio data for $\lambda=1$. Plants, molluscs, arachnids, insects, and fungi are triangles, circles, asterisks, x's, and crosses respectively. The plot is similar for the 1$\st$ vs 2$\nd$ kPCs (not shown).}
\end{figure}

Across all of the time series datasets, the GMMK is competitive with or outperforms the other methods.
On Gun-Point and Wafer, the GMMK well outperforms the rest of the methods. On the other three time series datasets it is at least competitive.
Interestingly, the EMMK often performs well. In the future, we plan to explore the performance of its generalization, the latent mean map kernel.

\subsection{Biodiversity data}
We also applied the GMMK to visualize ecological relationships between species sampled in Costa Rica by INBio\footnote{The National Biodiversity Institute of Costa Rica, \protect\url{http://www.inbio.ac.cr/en}.}. For each species, observations consist of counts on a grid, where each grid location has the
accompanying abiotic features altitude, median annual temperature, annual precipitation, isothermality, and temperature seasonality.
Because estimating spatial density is very suspectible to the fact that this is presence data, we instead estimate density in the abiotic feature space.
Each species has a true density in this 5-dimensional space, which we estimate using Gaussian RBF KDEs.
For each species sampling, we optimized the bandwidth of a KDE by minimizing the integrated square error \cite{wasserman2004as}.
After obtaining the optimal bandwidth for each species sampling, we formed the kernel matrix between all pairs of the species KDEs using the GMMK with $\lambda = 1$. We then applied kPCA to identify whether particular components well-separate certain groups of species. The visualization of the 1$\st$ and 4$\nth$ kernel principal components in Figure \ref{fig:species-manifold} shows some separation between the different groups: the insects and arachnids are clustered near the center, with plants toward the left and molluscs toward the right.

\section{Conclusion} \label{sec:conclusion}
We have discussed results on two generative kerenls, each embodying a different family of smoothed similarity measures between probability distributions. We have shown the first learning theoretic guarantees for the generative mean map kernel under certain conditions, and we have introduced the latent mean map kernel, a natural extension to Hilbert space embeddings of empirical distributions. Unlike other generative kernels, the GMMK both can operate on latent variable models and enjoy certain generalization error bounds independent of the Fourier spectra of the distributions it is comparing. We hope to generalize this result to a larger class of distributions in the future. Experimental results on discrete and continuous data suggest that the GMMK can outperform and is at least competitive with the PPK, the EMMK, the Fisher kernel, and the Bayes classifier.
Using sampling techniques \cite{jebara2004ppk}, it is straightforward to extend the GMMK to more expressive models such as factorial HMMs and Markov random fields.
For models with continuous random variables, such as HMMs with mixture of Gaussians observation distributions, evaluating the LMMK is highly expensive computationally; further testing in the continuous domain demands that we first overcome these computational challenges by using approximations of explicit representations of RKHS elements.

{
  \bibliographystyle{plain}
  \bibliography{mmk_v1}
}

\end{document}